\documentclass[12pt]{l4dc2020} 

\usepackage{todonotes}

\usepackage[ruled]{algorithm2e}

\usepackage{amssymb}
\usepackage{graphicx}
\usepackage{float}
\usepackage{color}
\usepackage{mathrsfs} 
\usepackage{mathtools}% Loads amsmath
\usepackage{cleveref}

\usepackage{hyperref}

\graphicspath{ {Figures/}}

% argmin and argmax

%commmands for jack's section

% arguments of functions e and f
% notations for sets of upstream and downstream neighbours

% constraint sets}

%
\definecolor{tb_color}{rgb}{0.1, 0.2, 0.7}

\newcommand{\parDiff}[2]{\frac{\partial #1}{\partial #2}}

% matrices for displayed expressions
\newcommand{\mat}[1]{\begin{matrix}#1\end{matrix}} % no delimiters
\newcommand{\pmat}[1]{\left(\mat{#1}\right)} % parentheses as delimiters
 % square brackets as delimiters

%\newtheorem{example}{Example}

% The following packages will be automatically loaded:
% amsmath, amssymb, natbib, graphicx, url, algorithm2e

%\title[]{System Identification using RNNs with differential Behavioural Constraints}
%\title[]{On Contracting Recurrent Neural Networks}

\title[]{Contracting Implicit Recurrent Neural Networks:\\Stable Models with Improved Trainability}

\usepackage{times}
% Use \Name{Author Name} to specify the name.
% If the surname contains spaces, enclose the surname
% in braces, e.g. \Name{John {Smith Jones}} similarly
% if the name has a "von" part, e.g \Name{Jane {de Winter}}.
% If the first letter in the forenames is a diacritic
% enclose the diacritic in braces, e.g. \Name{{\'E}louise Smith}

% Two authors with the same address
% \coltauthor{\Name{Max Revay} \Email{m.revay@acfr.usyd.edu.au}\and
%  \Name{Ian Manchester} \Email{i.manchester@acfr.usyd.edu.au}\\
%%  \addr Rose st buildling, Darlington, NSW, 2008}
%}

% Three or more authors with the same address:
% \coltauthor{\Name{Author Name1} \Email{an1@sample.com}\\
%  \Name{Author Name2} \Email{an2@sample.com}\\
%  \Name{Author Name3} \Email{an3@sample.com}\\
%  \addr Address}

% Authors with different addresses:
\author{%
 \Name{Max Revay} \Email{m.revay@acfr.usyd.edu.au}\\
% \and
 \Name{Ian Manchester} \Email{ian.manchester@sydney.edu.au }\\
 \addr Australian Center for Field Robotics,\\ Sydney Institute for Robotics and Intelligent Systems, \\ The University of Sydney, \\
 NSW, 2006. Australia
}

\begin{document}

\maketitle

\begin{abstract}
%	abstractus away
Stability of recurrent models is closely linked with trainability, generalizability and in some applications, safety. Methods that train stable recurrent neural networks, however, do so at a significant cost to expressibility. We propose an implicit model structure that allows for a convex parametrization of stable models using contraction analysis of non-linear systems.
%We reduce this cost by developing a convex set of recurrent neural networks with far less conservative stability conditions. 
Using these stability conditions we propose a new approach to model initialization and then provide a number of empirical results comparing the performance of our proposed model set to previous stable RNNs and vanilla RNNs. By carefully controlling stability in the model, we observe a significant increase in the speed of training and model performance.

%Despite the large impact they have, these properties have rarely been considered during training process. In this work, we draw on 
%	
%	extensive analysis of stability and other properties in the control community, stability analysis of RNNs in the machine learning community has remained relatively conservative. We develop a framework for developing RNNs with guaranteed stability properties. We do this by marrying ideas from diagonal stability and differential dissipativity with an implicit model structure. This allows us to develop a convex set of RNNs with specified properties. We demonstrate the uses of such a framework using a safety critical system identification task. 
\end{abstract}

\begin{keywords}%
	System Identification, Contraction, Stability, Recurrent Neural Network, Vanishing Gradient, Exploding Gradient, Nonlinear Systems, Echo State Network
\end{keywords}

\section*{Notation}Most of our notation is standard. For a matrix $A$, $A\succ0$ or $A\succeq0$ means that $A$ is positive definite or positive semi-definite. Similarly $A\prec0$ or $A\preceq0$ means that $A$ is negative definite or negative semi-definite. We use $\text{vec}(A)$ to refer to the vector obtained by stacking $A$ into a vector. $A \in \mathbb{D}_+$ means that A is a positive definite diagonal matrix. The normal distribution with mean $\mu$ and variance $\sigma^2$ is $\mathcal{N}[\mu, \sigma^2]$ and the uniform distribution between $a$ and $b$ is $\mathcal{U}[a,b]$.

\section{Introduction}
	Recurrent neural networks (RNNs) are a common class of dynamical system used to model sequential data \citep{yi_convergence_2004,mandic_recurrent_2001,graves_supervised_2012}.
%	Recurrent neural networks (RNNs) are a common class of model used for learning dynamical systems.
%	While there are many such  that can provide compact representations of processes generating time-series or other sequential data with long memories.
	They have been used extensively in areas such as system identification \citep{sjoberg_nonlinear_1995}, learning based control systems \citep{anderson_robust_2007, knight_stable_2011}, natural language processing \citep{zhou_attention-based_2016} and others. 
	Instability of dynamical systems can lead to unpredictable behaviour, and as such, stability should be a consideration when training and deploying RNNs.
	Systems using RNNs have also been proposed in safety critical applications such as autonomous driving \citep{zyner_long_2017}, surgical robotics \citep{mayer_system_2008} or active prosthetics \citep{boudali_predicting_2019}. In such cases, dynamical instability could lead to  injury or even death. As noted by a RAND corporation report on that state of AI: "The current state of AI verification, validation, test, and evaluation (VVT\&E) is nowhere close to ensuring the performance and safety of AI applications, particularly where safety-critical systems are concerned" \citep{tarraf_department_2019}.
%	The use of unverified neural networks as part of a control system can destabilize the system leading to unpredictable or dangerous behaviour of the system as a whole.
	 In practice, there are few approaches to training RNNs with stability guarantees.
	
%	\subsection*{Stability Motivated by Safety Concerns}

%	\subsection*{Stability To improve train-ability}
	The importance of model stability has more subtle implications than just safety; it is also closely related to the difficulty in fitting models.
	Training recurrent models using gradient descent is complicated by the exploding and vanishing gradients problem \citep{pascanu_difficulty_2013}.
	If a model is too stable the gradients will vanish, and if it is unstable they will explode.
%	The problem is caused by - essentially, this suggests that the difficulty in training is directly related to stability. If a model is too stable the gradients will vanish and if it is unstable they will explode.	
	A common approach to this problem, used for instance in the Long Short Term Memory (LSTM) \citep{hochreiter_long_1997} or Gated Recurrent Unit (GRU) \citep{cho_learning_2014}, is to alter the model structure to one less susceptible to these issues. 
	Alternatively, one can initialize or constrain the weights to be isometries that do not change the magnitude of the gradients. 
	For instance, it has been reported that models such as the iRNN \citep{le_simple_2015}, orthogonal RNN \citep{mhammedi_efficient_2017} and spectral RNN \citep{zhang_stabilizing_2018} can offer similar performance and trainability to the LSTM with fewer parameters.
%	\textcolor{red}{The use of conservative stability conditions, however, may lead to problems with vanishing gradients.}
%	By controlling the locations of the singular values, these methods control the stability and in doing so avoid both exploding and vanishing gradients.
	An alternative approach  avoids this issue by initializing a sufficiently rich bank of dynamics so that only the input and output mappings must be learned. This is the basis for the Laguerre filter \citep{wahlberg_system_1991} and the echo state network \citep{jaeger_adaptive_2003}.
%	An alternate paradigm is present in the `echo state network' \citep{jaeger_adaptive_2003} or Laguerre filter \cite{wahlberg_system_1991}, where by using sufficiently rich bank of dynamics, only the input and output mappings must be learned. 
%	However, most schemes for initialization are based on highly conservative contraction criteria that mean the resulting network will suffer from vanishing gradients.
	These two approaches suggest that a more effective initialization scheme may be to initialize with a rich set of dynamics, of which some modes have long memory.
	
%	\subsection*{Stability As a regularizer}
	Another area closely linked to stability is model generalization.
	Empirically, it has been observed that stability is an effective regularizer in system identification \citep{umenberger_convex_2019,umenberger_specialized_2019}. 
	There are also theoretical results that relate generalization to a form of stability \citep{zhang_stabilizing_2018}.	%	For instance, \cite{allen-zhu_convergence_2019} show that over-parametrization allows vanilla RNNs with ReLU activations to be trained to small training error, however, there is no guarantee that the resulting model will perform well on test data.

\subsection{Contraction Analysis}

Stability can be defined in many ways. For input-output systems, \cite{zames_input-output_1966} argued for two properties: Firstly, bounded inputs should produce bounded outputs. Secondly, outputs should not be critically sensitive to small changes in inputs.

Many approaches to RNN stability analysis focus on global stability of a particular equilibrium. This however, guarantees neither of these properties \citep{sontag_input_2008}. An additional complication is that the analysis is centred on a particular trajectory and in practice, we do not know the trajectories for unknown inputs.
% which only guarantees the first of these properties.
For instance, \cite{kaszkurewicz_robust_1993,kaszkurewicz_class_1994, kaszkurewicz_matrix_2000} propose diagonal stability. For a certain class of non-linearity, less conservative stability guarantees can be found using diagonally dominant Lyapunov functions \citep{chu_bounds_1999}. Absolute stability theory \citep{barabanov_stability_2002} has also been used to reduce reduce conservatism. 

%For learning applications, Lyapunov stability can be restrictive for two reasons: Firstly, analysis is centred on a particular trajectory. In practice we do not know the trajectories for unknown inputs. Secondly, Lyapunov stability only implies boundedness of solutions, i.e. Lyapunov stable systems may be sensitive to input perturbations.

Contraction \citep{lohmiller_contraction_1998} and incremental stability \citep{angeli_lyapunov_2002}, on the other hand, both guarantee stability in the sense of \cite{zames_input-output_1966} and are independent of the input or equilibrium.
%Two approaches that do not suffer from these issues are incremental stability \citep{angeli_lyapunov_2002} and contraction analysis \citep{lohmiller_contraction_1998}. 
%These define stability in terms of the distance between trajectories.
% Additionally, contraction analysis places conditions directly on the gradients of the system, an important consideration in learning applications.
We provide a brief introduction to contraction analysis. Additional details can be found in \cite{lohmiller_contraction_1998}.
Suppose that we have a non-linear dynamical system with  dynamics:
\begin{equation} \label{eq:contraction_dynamics}
x_{k+1} = f(x_k, k),
\end{equation}
where $x_k$ is the state of the system at time $k$. If $f$ is piecewise differentiable, then we can study the differential dynamics given by:
\begin{equation} \label{eq:differential_dynamics}
%\delta_{x_{k+1}} = F(x_k, k) \delta_{x_k}
\delta_{x_{k+1}} = F(x_k, k, \delta_{x_k})
\end{equation}
where $F(x, k, \delta_x)$ is the directional derivative of $f(\cdot)$ at $x_k$ in the direction $\delta_k$. If $f(\cdot)$ is differentiable then this can be written as $F(x, k, \delta_x) = \parDiff{f(x, k)}{x} \delta_x$. The vector $\delta_x$ can be interpreted as a infinitesimal displacement between two neighbouring trajectories. 
Stability of the differential dynamics \eqref{eq:differential_dynamics} imposes a strong type of stability on the dynamical system \eqref{eq:contraction_dynamics}, whereby, all trajectories of the original system \eqref{eq:contraction_dynamics} converge exponentially to a single trajectory. This is done by searching for a contraction metric or differential Lyapunov function \citep{forni_differential_2014} $V(x_k, \delta_{x_k}) >0, ~\forall ~\delta_x \neq 0$ such that $V(x_{k+1}, \delta_{x_{k+1}}) \leq \lambda V(x_k, \delta_{x_{k}})$ for $0 < \lambda < 1$. In this work we also allow for the case where $\lambda=1$ to allow for non-expansive systems.

In principle there are many metric structures that can be used. A common approach however, parametrizes a quadratic form 
$V(\delta_{x_k}) = \delta_k^\top M \delta_k$,  where $M\succ 0$ is a positive definite matrix. In this case a sufficient condition for contraction when $f(x,k)$ is differentiable is:
\begin{gather} \label{eq:contraction_LMI}
%M \succ 0 \\
\parDiff{f}{x}^\top M \parDiff{f}{x} - \lambda M \preceq 0, 
\end{gather}
and $M$ is called a contraction metric. Methods that analyse stability by bounding the maximum singular value (e.g \cite{miller_stable_2018,zhang_stabilizing_2018}) can be seen as a special case of \eqref{eq:contraction_LMI}, where $M = I$ and $\lambda = 1$.
The use of a parametrized metric provides considerable flexibility to the model set. This can be seen in the following example:
\begin{example} \label{ex: simple rnn}
	Consider the simple $1$ layer RNN:
	\begin{gather}
	h_1^{k+1} = \sigma(A h_2^{k}), \quad  A = \pmat{0.8 & 1\\ 0 & 0.8}
	\end{gather}
	Where $\sigma(z) = \max(0, z)$ is a ReLU non-linearity with Lipschitz constant $L_\sigma = 1$. The matrix $A$ has a maximum singular value of $1.44$ so it does not satisfy the condition used by \cite{miller_stable_2018}. On the other hand, using condition \eqref{eq:contraction_LMI}, we can construct a contraction metric $V = \delta_{h_k}^\top P\delta_{h_k}$ with $P = \text{diag}(1, 10)$ in which the system is contracting.
\end{example} 
%\begin{example}
%Two layer case:	
%\end{example}

\subsection{Convex Parametrizations} 
Contraction analysis is a powerful tool for studying dynamical systems. Synthesis (e.g. control design or system identification) with contraction constraints is complicated by  non-convexity in the model parameters and contraction metric. 
%the fact that contraction constraints are not, in general, jointly convex in the model parameters and contraction metric. 
To be precise, \eqref{eq:contraction_LMI} is convex in $M$ or $f$, but not $M$ and $f$. 
Even when minimizing non-convex loss function, there is benefit to convex stability constraints
If the constraints are convex (even if the objective is not), then the situation is greatly improved. If using a projected gradient method, convexity allows for easy projection onto the feasible set at each step. Alternatively, penalty or barrier functions for convex sets can be added without making the resulting problem much harder than the unconstrained problem.

Methods such as \cite{miller_stable_2018} essentially avoid this problem by fixing the metric at the cost of model expressibility.
It has been found, however, that using an implicit model structure allows parametrizations jointly convex in the model parameters and contraction metric \citep{tobenkin_convex_2017}. Our work extends this approach to the model class of RNNs.

\subsection{Contributions}
 We propose a class of contracting implicit recurrent neural network that is jointly convex in the model parameters and stability certificate. 
The proposed set has less conservative stability conditions which leads to greater expressibility when compared to previous stable RNNs - particularly in the multilayer case. Additionally, we propose an initialization procedure that ensures both a rich set of dynamics and that the model is not too stable to train which improves the model trainability. We then provide empirical results on a simulated model and a gait prediction task highlighting the benefits of our approach.

\section{Model Set}

	%\subsection{Implicit model structures, Storage Functions and Dissipativity}
	We are interested in fitting state space models parametrized by $\theta\in\Theta \subseteq \mathbb{R}^p$, of the following form:
	\begin{gather}\label{eq:ss model}
		x_{k+1} = f_\theta(x_k, u_k) \\
		y_k = g_\theta(x_k, u_k) \label{eq:ss model out}
	\end{gather}
	where the function $f_\theta(x,u)$ can be represented by an $L$-layer Neural Network with skip connections, and $\Theta$ refers to a convex set of parameters to be defined later. In this case, we write the dynamics in \eqref{eq:ss model} as follows:
%	\begin{align}
%		&f_\theta(x, u) = z^{L}, \quad ~ ~ z^0 = x, \nonumber  \\ 
%		&z^{\ell+1} = \phi(A_\ell z^\ell + B_\ell u + b_\ell), \quad\text{for} \quad\ell = 0,..., L-1.\label{eq:explicit dynamics}
%%		&.\nonumber
%	\end{align}
		\begin{align}
	&z^0 = x, & z^{\ell+1} = \phi(A_\ell z^\ell + B_\ell u + b_\ell)~ \quad\text{for} ~ \ell = 0,..., L-1, & \quad f_\theta(x, u) = z^{L}.& 
	\label{eq:explicit dynamics}
	\end{align}
	Here, the superscript to refers to the layer in the network, $z^\ell$ is the output of the $\ell$'th hidden layer of the network and are not necessarily of the same size. The weight matrices and bias for the $\ell$'th layer are denoted $A_\ell$, $B_\ell$ and $b_\ell$ respectively. 
	
	We define the set of admissible activations $\phi(\cdot)$ to be the set of piecewise differentiable, scalar, non-linearities with slope restricted to $[-\gamma, \gamma]$. For simplicity we will assume $\gamma=1$, however this can be relaxed. This includes any collection of standard activation functions.

	% $\ell$ at time $k$ and $x^k$ is the input at time $k$ and we take $h^{k+1}_0 = h^k_L$ to make the network recurrent. 

	 Skip connections from the input to the $\ell$'th hidden are contained within $B_\ell$ and skip connections between hidden layers are included by replacing part of the activation with a linear activation. 
	 
	 The contraction properties are independent of the output mapping so that $g_\theta(x,u)$ can be any function. In all examples we will take the output to be linear in the input and final hidden layer so that $g_\theta(x,u)= C x + D u$.

	 \subsection{Implicit RNNs}
	 We will refer to the dynamics in \eqref{eq:explicit dynamics} as the explicit model. We can also  parametrize the same set of models using the following implicit, redundant parametrization:
%	 \begin{align}\label{eq:RNN}
%	 &f_\theta(x, u) = E_L h^{L}, \quad ~ ~ E_0 h^0 =  x, \nonumber  \\ 
%	 &E_{\ell+1} h^{\ell+1} = \phi(W_\ell h^{\ell}+ B_\ell u + b_\ell), \quad \ell = 0,...,L-1,
%	 \end{align}
	 	 \begin{align}\label{eq:RNN}
	 &E_0 h^0 =  x,	 &E_{\ell+1} h^{\ell+1} = \phi(W_\ell h^{\ell}+ B_\ell u + b_\ell)~ \text{for}~\ell = 0,...,L-1,  &\quad f_\theta(x, u) = E_L h^{L},&
	 \end{align}
	 where $W_\ell$ and $E_\ell$ are learnable weight matrices and $E_\ell$ are invertible. 	
	 Note that the implicit and explicit models are input/output equivalent under the coordinate transformation $z_{\ell} = E_{\ell}h_\ell$ and $A_\ell = W_\ell E_{\ell-1}^{-1}$. 

%	For simplicity, we also assume that the output mapping is linear and depends only on the input and the value of the hidden state at the final layer so that:	\begin{equation} \label{eq:RNN_output_mapping}
%	y_k = C x_k + D u_k.
%	\end{equation}
	
	We can treat multi-layer networks as a time-varying, periodic, non-linear system  by dividing up each $k$ step into $L$ sub-steps so that 
	 \begin{equation}
	 h_k^{\ell+1} = f^\ell(h_k^\ell, u_k), \ \ \ell = 0,...,L-1
	 \end{equation}
	with $f^\ell$ defined in \eqref{eq:RNN} and $h^0_{k+1} = h^{L}_k$.
	The associated differential dynamics of the network are given by 
$E_{\ell+1} \delta_{{k}}^{\ell+1} =  \Lambda\left(h_k, W_\ell \delta_{k}^\ell\right) , \ \ell = 0,...,L-1, $
%\begin{gather} \label{eq:RNN_differential_dynamics}
%\delta_{h_1}^{k+1} = E_1^{-1} \Lambda(\cdot) \left[W_L \delta_{h_L}^k  + B_L \delta^k_{x}\right]\\
%\delta_{h_2}^{k+1} = E_2^{-1} \Lambda(\cdot)  \left[W_1\delta_{h_1}^k + B_1\delta^k_{x}\right]\nonumber \\
%\quad \quad ~ ~ \vdots \nonumber\\
%\delta_{h_L}^{k+1} = E_L^{-1} \Lambda(\cdot) \left[W_{L-1} \delta_{h_{L-1}}^k  + B_{L-1}\delta^k_{x}\right]\nonumber
%\end{gather}
where $\Lambda(h_k, \delta_k)$ is the directional derivative of $\phi$ at $h_k$ in the direction $\delta$ and $\delta_{k}^\ell$ is a differential in the at time $k$ and layer $\ell$.

%The differential output is then given by $\delta_y^k = C \delta_{x_L}^k + D \delta_u^k$. 

%The differentials can be interpreted as linearisation of along arbitrary trajectories of a system. Contraction analysis seeks to impose global properties on a system by imposing a local property everywhere.

%Contraction analysis asserts that a system is stable if all of its linearizations are stable in a common metric.

\subsection{Contraction Implicit RNNs}
We now define the set of contracting implicit RNNs (ci-RNNs): A ci-RNN is an implicit RNN defined as \eqref{eq:ss model} with $f_\theta(x,u)$ defined in \eqref{eq:RNN} with an additional contraction constraint. We propose to use the following constraints to ensure model stability:
\begin{gather} \label{eq:RNN_single_layer_LMI}
\begin{pmatrix}
E_\ell + E_\ell^\top -  P_\ell &  W_\ell^\top \\
W_\ell &  P_{\ell+1}
\end{pmatrix} \succeq 0, \quad \ell = 0,...,L-1
\end{gather}
with $P_0 = \lambda P_L$. The set of ci-RNNs, denoted $\Theta_{ci}$ is defined as:
$$ \label{eq:ciRNNs}
\Theta_{ci}:= \left\{\theta~:~\exists P_0,...,P_L \in \mathbb{D}_+ ~ \text{s.t.} ~ P_0 = \lambda P_L,~ E + E^\top \succ 0, ~ \eqref{eq:RNN_single_layer_LMI}\right\}
$$
Note that $\Theta_{ci}$ is convex as it is the intersection a number of semi-definite cones and a linear equality constraint, and for all $\theta\in \Theta_{ci}$, there exists a corresponding explicit RNN \eqref{eq:explicit dynamics}. Fixing $E_\ell=I$ and $P_\ell=I$ recovers the model set used by \cite{miller_stable_2018}.

\begin{theorem} \label{thm:RNN_single_layer_storage_function}
	Suppose that $\theta \in \Theta_{ci}$, then the model \eqref{eq:ss model}, \eqref{eq:RNN} is contracting with rate $\lambda$ in the metric $V=\delta_x^\top E_0^\top P_{0}^{-1} E_0 \delta_x$.
\end{theorem}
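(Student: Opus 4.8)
The plan is to use the redundant implicit parametrization to convert the non-convex contraction condition \eqref{eq:contraction_LMI} into a telescoping argument over the $L$ sub-steps: each constraint in \eqref{eq:RNN_single_layer_LMI} will certify a one-layer dissipation inequality for the differential dynamics, the $L$ inequalities chain into a single non-expansion estimate across one time step, and the periodic boundary condition linking $P_0$ and $P_L$ supplies the contraction rate $\lambda$. Throughout I work with the differential dynamics $E_{\ell+1}\delta^{\ell+1}_k = \Lambda(h_k, W_\ell \delta^\ell_k)$ and with the candidate metric $V_\ell := (\delta^\ell_k)^\top E_\ell^\top P_\ell^{-1} E_\ell \delta^\ell_k$, whose $\ell=0$ instance is exactly the metric $V$ in the statement.

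First I would absorb the nonlinearity. Since $\phi$ is piecewise differentiable with slope restricted to $[-1,1]$ and acts componentwise, the directional derivative obeys $\Lambda(h,v)^\top P^{-1}\Lambda(h,v)\le v^\top P^{-1} v$ for every $P\in\mathbb{D}_+$, because each entry of $\Lambda(h,v)$ has magnitude at most that of the corresponding entry of $v$ and $P^{-1}$ is diagonal and positive. Applied to the differential dynamics with $v=W_\ell\delta^\ell_k$ and $P=P_{\ell+1}$, this gives $(E_{\ell+1}\delta^{\ell+1}_k)^\top P_{\ell+1}^{-1}(E_{\ell+1}\delta^{\ell+1}_k)\le (W_\ell\delta^\ell_k)^\top P_{\ell+1}^{-1}(W_\ell\delta^\ell_k)$, which reduces the whole problem to a statement about the weight matrices and handles the non-smooth points of $\phi$ without Jacobians.

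Next I would extract a layerwise matrix inequality from \eqref{eq:RNN_single_layer_LMI}. Taking the Schur complement with respect to the positive definite block $P_{\ell+1}$ yields $E_\ell + E_\ell^\top - P_\ell \succeq W_\ell^\top P_{\ell+1}^{-1} W_\ell$, and the elementary bound $E_\ell^\top P_\ell^{-1} E_\ell \succeq E_\ell + E_\ell^\top - P_\ell$ (equivalent to $(E_\ell - P_\ell)^\top P_\ell^{-1}(E_\ell - P_\ell)\succeq 0$) then gives the key inequality $E_\ell^\top P_\ell^{-1} E_\ell \succeq W_\ell^\top P_{\ell+1}^{-1} W_\ell$. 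Chaining this with the bound from the previous paragraph produces the one-layer estimate $V_{\ell+1}\le V_\ell$, and telescoping over $\ell=0,\dots,L-1$ gives $V_L\le V_0$ within a single step.

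Finally I would close the loop across time. Using $h^0_{k+1}=h^L_k$ (hence $\delta^0_{k+1}=\delta^L_k$) and the matching of the boundary factors together with the relation between $P_0$ and $P_L$, the end-of-step metric $V_L$ and the start-of-step metric $V_0$ attached to $x_{k+1}$ differ by exactly the factor $\lambda$; substituting this identification into $V_L\le V_0$ delivers $V(x_{k+1},\delta_{x_{k+1}})\le \lambda\, V(x_k,\delta_{x_k})$, which is the claimed contraction. I expect this last bookkeeping step to be the main obstacle, precisely because the per-layer constraints by themselves only certify non-expansion ($V_{\ell+1}\le V_\ell$, with no rate), so the entire contraction rate has to be recovered from the boundary rescaling; one must track carefully which metric ($P_0$ versus $P_L$) is attached to each end of the period so that the factor $\lambda$ lands on the correct side of the inequality. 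Verifying the slope-restriction estimate rigorously at the non-differentiable points of $\phi$ is a secondary technical point, already dispatched in the first step.
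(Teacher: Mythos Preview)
Your proposal is correct and follows essentially the same route as the paper: Schur complement on \eqref{eq:RNN_single_layer_LMI}, the slope-restriction bound $\Lambda(h,v)^\top P^{-1}\Lambda(h,v)\le v^\top P^{-1}v$ for diagonal $P\succ 0$, the convexity estimate $E_\ell^\top P_\ell^{-1}E_\ell\succeq E_\ell+E_\ell^\top-P_\ell$, telescoping over layers, and the periodic identification $P_0\leftrightarrow P_L$ to extract the rate $\lambda$. Your ordering differs slightly (you absorb the nonlinearity before invoking Schur), and your explicit flagging of the $P_0$-versus-$P_L$ bookkeeping is in fact more careful than the paper's own presentation of that step.
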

\begin{proof} We would like to show that the condition \eqref{eq:RNN_single_layer_LMI} implies the existence of a contraction metric $V_k$ for the system \eqref{eq:ss model}, \eqref{eq:RNN}, for which $V_{k+1}\leq \lambda V_k$.
	Via Schur complement \eqref{eq:RNN_single_layer_LMI} is equivalent to:
%	$$
%		\begin{pmatrix}
%		E_\ell + E_\ell^\top -  P_\ell & 0\\
%		0  & 0
%		\end{pmatrix}  - \begin{pmatrix}
%			W_\ell^\top \\ B_\ell^\top
%		\end{pmatrix} P_{\ell+1}^{-1}\begin{pmatrix}
%		W_\ell & B_\ell
%		\end{pmatrix}\succeq 0
%	$$
	$$
	E_\ell + E_\ell^\top -  P_\ell  - 
	W_\ell^\top  P_{\ell+1}^{-1} W_\ell \succeq 0.
	$$
	For all admissible activation functions and diagonal $P \succ 0$, we have $\delta^\top P^{-1} \delta \geq \Lambda(h, \delta)^\top P^{-1} \Lambda(h, \delta) $. Left and right multiplying by $\delta_h$ gives 
	$$
%		\sigma_\ell(\delta_{h_{\ell}}, \delta_{x_{\ell}})
 \delta_{h_{\ell+1}}E_{\ell+1}^\top P_{\ell+1}^{-1} E_{\ell+1} \delta_{h_{\ell+1}} - \delta_h'(E + E' - P)\delta_h \leq 0.
	$$
	Introducing the storage function $V_k^\ell(\delta_{h_k}^\ell) = {\delta_{h_{k}}^\ell}^\top E_{\ell}^\top P_{\ell}^{-1} E_{\ell} \delta^\ell_{h_{k}}$ and using the bound $E^\top P^{-1} E \succ E + E^\top - P$, we can see that $		V_k^{\ell+1} - V_k^\ell < 0$.
	Summing this from $\ell = 0,....,L-1$ gives $V_k^L - V_k^0 \leq 0$. Due to the periodicity, we have $V_{k+1}^0 =  \lambda V_k^L$, so $V_{k+1}^0 \leq \lambda V_k^0$, and the system is contracting in the metric $V_k^0$.
\end{proof}

\section{Method} \label{sec:Method}
We demonstrate the use of the proposed model set in a system identification context. 
%Previous methods in the system identification literature have shown that imposing contraction constraints can significantly improve model generalizability \cite{umenberger_specialized_2018, umenberger_convex_2019}.
In particular, we are interested in finding functions $f_\theta$ and $g_\theta$ that minimize the simulation error:
\begin{equation} \label{eq:simulation error}
\min_{\theta\in\Theta, h_0} J_{sim} = \sum_{k=0}^{T} |y_k - \tilde{y}_k|^2 \quad 
s.t. \quad 	 h_{k+1} = f_\theta(h_k, \tilde{u}_k), \quad y_k = g_\theta(h_k, \tilde{u}_k)
\end{equation}
where $\Theta$ is the domain of the parameters and $(\tilde{u}_k, \tilde{y}_k)$ are the measured inputs and outputs to system we would like to identify. We will compare the proposed ci-RNN with two others. The first is a regular RNN defined by the equations \eqref{eq:explicit dynamics} and the second is the stable RNN (s-RNN) defined by the explicit dynamics \eqref{eq:explicit dynamics} with $A_\ell$ having spectral norm less than 1. We enforce this using the following LMIs:
\begin{equation} \label{eq:sn_stability}
		\pmat{I & A_\ell^\top \\ A_\ell & I } \succeq 0, \quad \ell = 0,...,L-1.
	\end{equation}
In the one layer case, this is the same model set used in \cite{miller_stable_2018}.
%\begin{itemize}
%	\item Unconstrained models: These will sometimes be referred to as `$u_x$' where $x$ is the number of layers. These are models with $f_\theta(\cdot)$ taken to have an `explicit' structure \eqref{eq:explicit dynamics} with 
%	$\Theta = \Theta_u = \mathbb{R}^p$ where $p$ is the number of model parameters. 
%	\item Spectral Norm: These are RNNs of the explicit form \eqref{eq:explicit dynamics} with the eigenvalues of the weight matrices restricted to a spectral norm ball of radius $\beta$ using the following LMI:
%	\begin{equation} \label{eq:sn_stability}
%		\pmat{\beta I & W_\ell^\top \\ W_\ell & \beta I } \succeq 0, \quad \ell = 0,...,L-1.
%	\end{equation}
%	These will sometimes be referred to a `$sn_x$' where $x$ is the number of layers. In the single layer case, this is equivalent to the model used by \cite{miller_stable_2018} with $\beta = \frac{1}{L_\phi}$, where $L_\phi$ is the Lipschitz constant of the activations. In this case we take $\Theta = \Theta_{sn} = \{\theta | \eqref{eq:sn_stability}\}$. 
%	
%	\item Implicit Contracting Models: These are models with $f_\theta(\cdot)$ taking an implicit structure of the form \eqref{eq:RNN} that are subject to the contraction constraint \eqref{eq:RNN_single_layer_LMI}. We will refer to these models using the letter `$c$'. 
%	 In this case we take $\Theta = \Theta_{c} = \{\theta | \eqref{eq:RNN_single_layer_LMI}\}$.
%\end{itemize}

\subsection{Model Initialization} \label{sec:Initialization}
We propose to initialize the models in a two step procedure. Firstly, we sample weights for the explicit model \eqref{eq:explicit dynamics} as follows
\begin{equation} \label{eq:weight sampling}
A^{ij} \sim \mathcal{N}\left[0, \frac{\alpha^2}{n}\right],
\end{equation}
where $\alpha$ is a hyper parameter that relates to how close to instability we expect our model to operate and $n$ is the width of the weight matrix.  According to the random matrix circular law \citep{tao_random_2008}, we expect the eigenvalues to be approximately distributed over a circle of radius $\alpha$. The intuition is to try and generate a rich set of dynamics so that we only need to learn the input and output mappings of the dynamical system.
Depending on $\alpha$, we may find that this method of sampling generates a number of unstable models that complicate training. We project onto the set of contracting implicit models by solving the following convex optimization problem:
\begin{align}
& \underset{\theta \in \Theta_{ci}}{\min} \quad \sum_{\ell=0}^{L-1}|A_\ell E_\ell - W_\ell|^2_F
\end{align}
We solve this optimization problem using the cvxpy toolbox \citep{diamond_cvxpy:_2016}.
For the s-RNN model set, we project onto the set of stable models by clipping the singular values as in \cite{miller_stable_2018}. We leave the regular RNNs as they are.

\subsection{Training Procedure}

Fitting the models ci-RNN or s-RNN require a number of LMIs to be satisfied. We do this using the Burer-Montero method that has been shown to be both empirically \citep{burer_nonlinear_2003} and theoretically \citep{boumal_non-convex_2016} effective for a wide range of problems. This involves replacing a semi-definite constraint $M\succeq 0 $ with a series of equality constraints:
$ M = \mathcal{L} \mathcal{L}^\top $
where $\mathcal{L}$ is an auxiliary matrix variable. 
%While this is converting a convex constraint to a series of non-convex ones, this method has been shown to be empirically very effective \cite{boumal_non-convex_2016}.

We then use ADAM optimizer to minimize the following objective 
$
J = MSE(y_{0:T}, \tilde{y}_{0:T}) + \mu c^\top c
$ where MSE is the mean square error, $y_{0:T}$ are the outputs from simulating the model using \eqref{eq:ss model} and \eqref{eq:ss model out} and $c = \text{vec}(M - \mathcal{L}\mathcal{L}^\top)$ are the equality constraints. We use an initial learning rate of $0.5\times 10^{-3}$ which decays by a factor of $0.96$ at each epoch and an initial penalty parameter of $500$. If the equality constraints are violated by more than $1\times10^{-3}$, we increase the penalty parameter by a factor of $10$.
The models are trained until more than $20$ epochs have passed without seeing a model better than the best seen so for (on validation).

\section{Results}
We test the proposed approach on two systems. Firstly, we will look at a simple simulated system and explore the effects of implicit parametrizations, stability constraints and model initialization.
%The first system we are going to study is a simple simulated system first studied in \cite{chen_non-linear_1990}. The system has been slightly modified so that it is close to instability. We will look at fitting simple single layer neural networks and show that even in the simple single layer case, instability can occur leading to poor results.
Then, we will compare the ci-RNN to both the RNN and s-RNN using a human gait prediction task based on data gathered from Motion Capture (MOCAP) experiments. Code to reproduce all examples can be found at \url{https://github.com/imanchester/ci-rnn}.

\subsection{Simulated System} \label{sec:chen's}
We generate data from a slight modification of the system used in \cite{chen_non-linear_1990}. The system has been modified so that it operates closer to the edge of stability, as follows:
\begin{multline} \label{eq:chen's_system}
x_k = 1.4 \bigg[ \left(0.8 - 0.5e^{-x_{k-1}^2}\right) x_{k-1} - \left(0.3 + 0.9e^{-x_{k-1}^2}\right)x_{k-2} + \\ u_{k-1} + 0.2 u_{k-2} + 0.1 u_{k-1} u_{k-2} + w_k \bigg]
\end{multline}
%$$
%y_k = x_k
%$$
with process noise $w_k \sim \mathcal{N}(0, 0.5)$ and inputs $u_k \sim \mathcal{N}(0, 1)$. For each model realization, we generate a data set consisting of 20 batches of 500 measurements and train models with 2 layers of 60 hidden units per layer  and ReLU activations. We train 5 different types model denoted A-E. The models `A' are ci-RNNs with the initialization scheme in Section \ref{sec:Initialization}, $\alpha=1.2$. `B' are implicit models with the same initialization but without the stability constraint \eqref{eq:RNN_single_layer_LMI}. `C' are implicit models initialized so that $E=I$ and $W_{ij} \sim \mathcal{U}[-\frac{1}{\sqrt{60}}, \frac{1}{\sqrt{60}}]$. `D' are explicit models with no stability constraints initialized by sampling $A_{ij}\sim \mathcal{N}[0, \frac{1}{{60}}]$ and finally, `E' are explicit models initialized by sampling $A_{ij}\sim \mathcal{N}[0, \frac{1}{{60}}]$ and projecting onto the unit spectral norm ball.
%Specifically, we would like to study the effects of the implicit parametrization, initialization procedure and stability constraints. Data showing the performance in MSE on training set versus the number of epochs is plotted in figure \ref{fig:chen train}.
\begin{figure}
	\hfil
	\begin{minipage}[]{0.45\linewidth}
	\centering
	\includegraphics[trim = {0.5cm 7cm 1.5cm 7.5cm}, clip,width=
	0.8\linewidth]{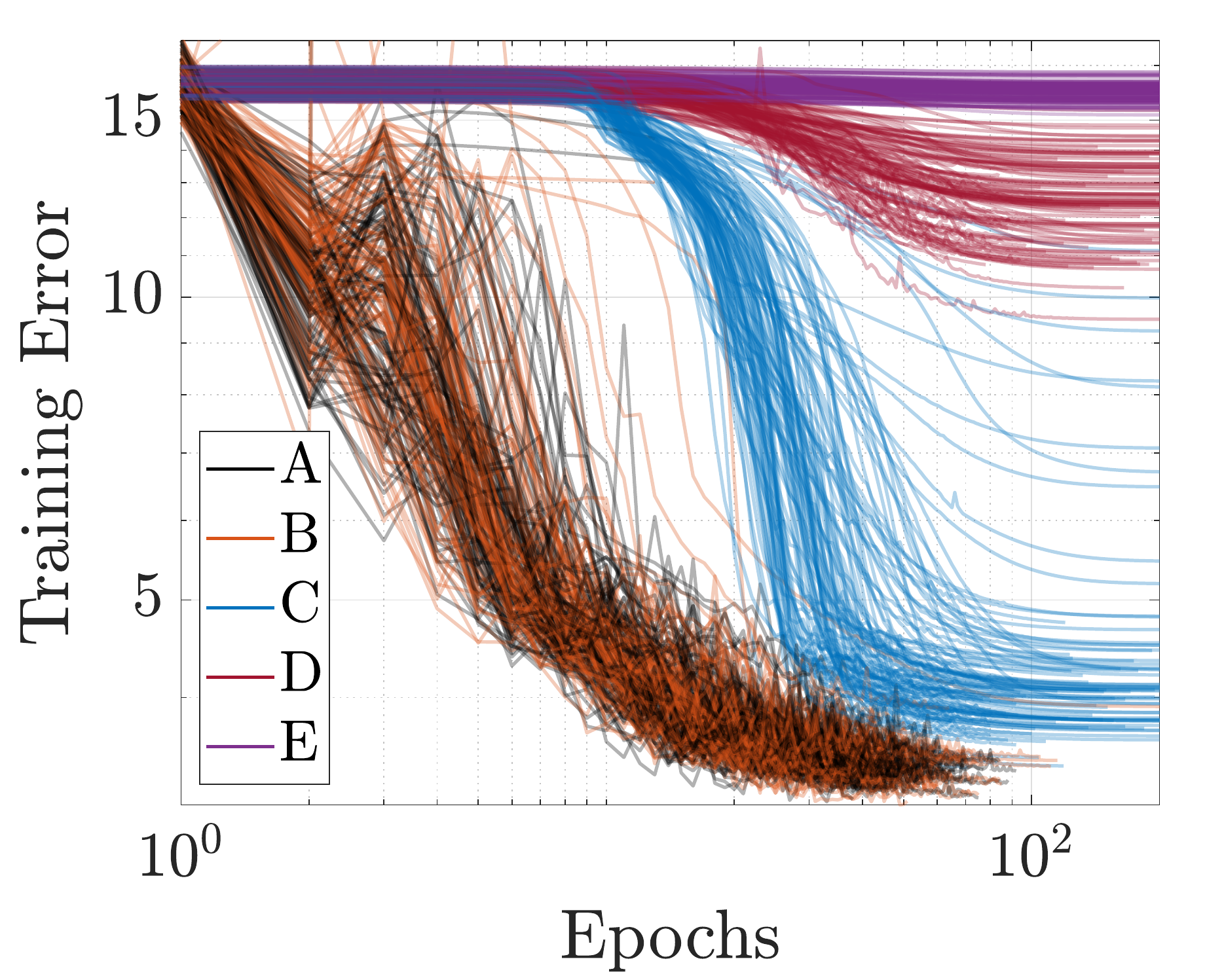}
	\caption{\label{fig:chen train}Best viewed in colour. Figure explained in text (Section \ref{sec:chen's}).}	
\end{minipage}
\hfill
\begin{minipage}[]{0.45\linewidth}
	\centering
	\includegraphics[trim = {0.5cm 7cm 1.5cm 7.5cm}, clip,width= 0.8\linewidth]{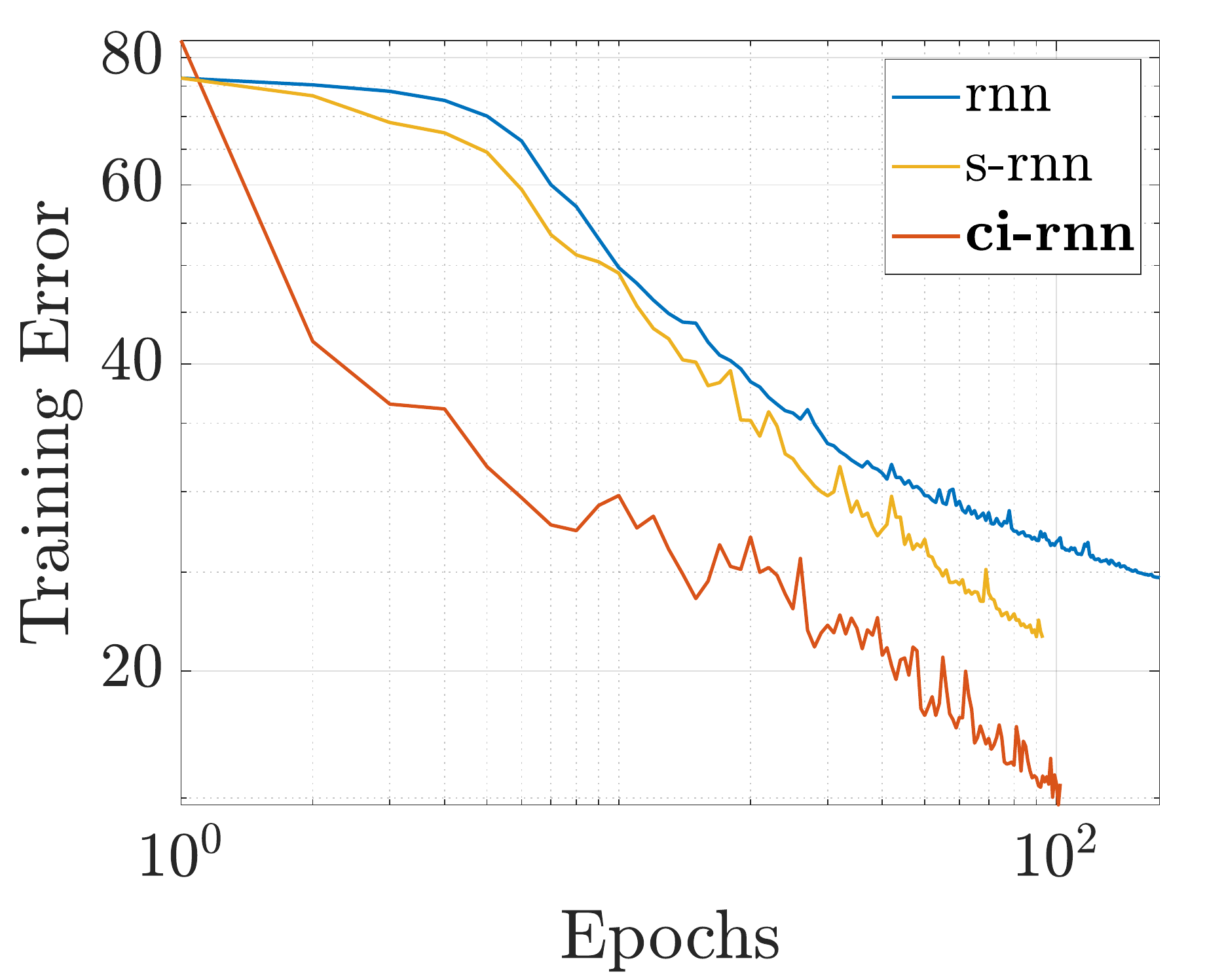}
	\caption{\label{fig:training speed}  Training error versus epochs for gait prediction task for subject 1.}
\end{minipage}
\hfil
\end{figure}

Comparing the models `C' with `D' and `E', we can see that the implicit model structure appears to make training much easier. Comparing the models `A' and `B' with `C', we also see that the initialization procedure appears to significantly speed up training. 
Finally, comparing A with B we see that the proposed contraction constraint did not hinder training compared to unconstrained models of the same structure. This is in contrast to the spectral norm constraint: comparing D and E we can see that the constraint dramatically hinders training.
\subsection{Gait Prediction}
%We demonstrate the proposed method on a gait prediction problem \cite{boudali_prediction_2019}.
 The problem is to determine a mapping to the trajectory of the left leg joint angles from the trajectories of the remaining limbs. Such a model can be used, for example, to generate trajectories for an actuated prosthetic limb or exoskeleton. As noted by the authors, this is a system where stability is an important concern as unstable models can lead to unpredictable or dangerous behaviour.

The problem data consists of measurements of joint angles from $9$ participants who instructed to walk across flat ground, up a flight of stairs and then stop at the top. Data was gathered using a MOCAP system.
Additional details on the data collection can be found in \cite{boudali_prediction_2019,boudali_predicting_2019}.  The exercise was repeated $11$ times for each participant and the final two trials were withheld as a test set. The remaining $9$ datasets were used to train $9$ models using $9$-fold cross validation. All models have $64$ hidden units per layer, ReLU activations and layers varying from 1 to 3.
%The optimization was terminated when more than $20$ epochs had passed without an improvement in performance on the validation set.

\begin{figure}[]
	\centering
	\subfigure[Subject 1 Training NSE\label{fig:train boxplot}]{\includegraphics[trim = {4.1cm 1.5cm 6cm 0.5cm}, clip,width=0.32 \linewidth]{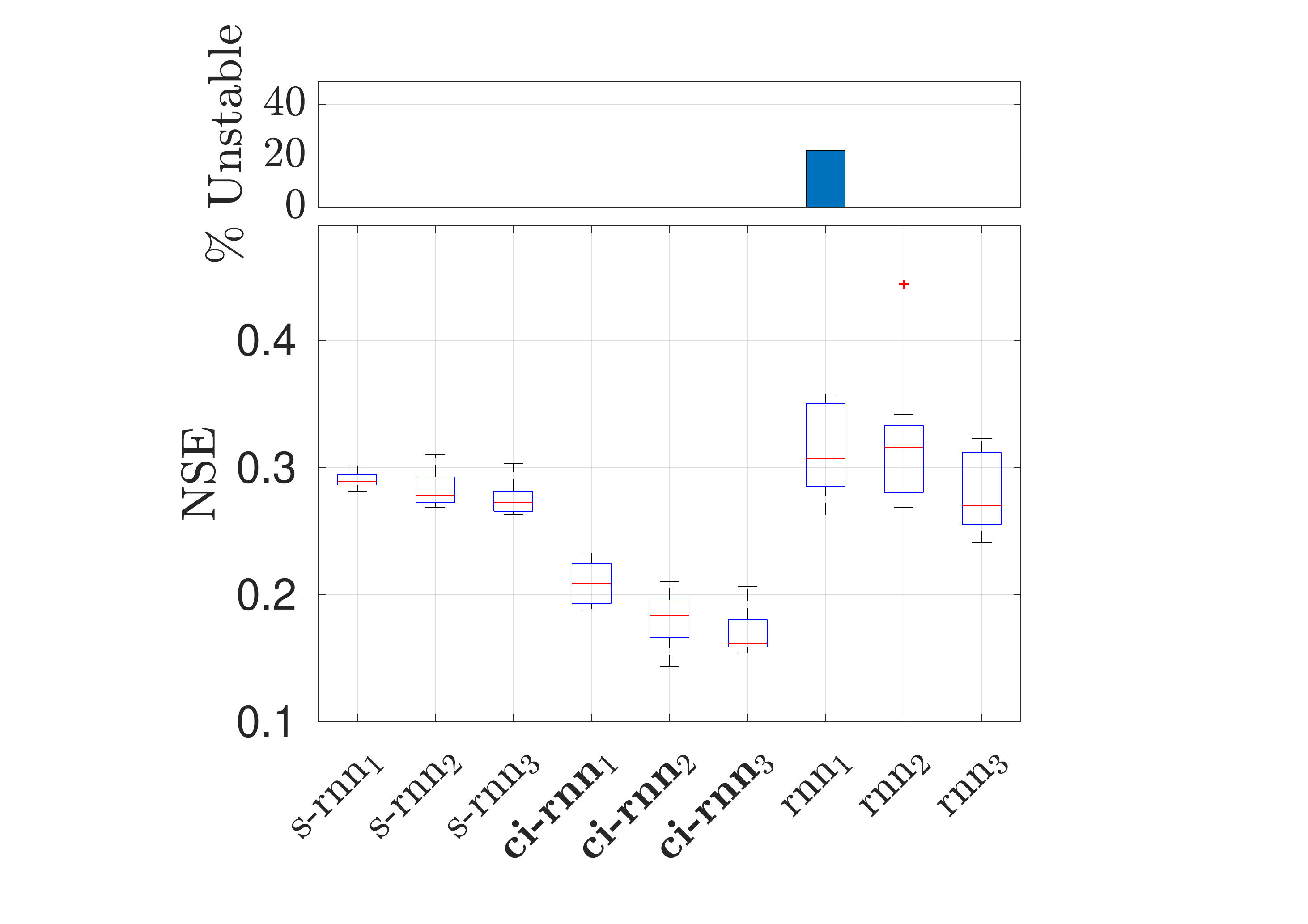}}
	\hfill
	\subfigure[Subject 1 Test NSE\label{fig:test boxplot}]{\includegraphics[trim = {4.1cm 1.5cm 6cm 0.5cm}, clip,width=0.32 \linewidth]{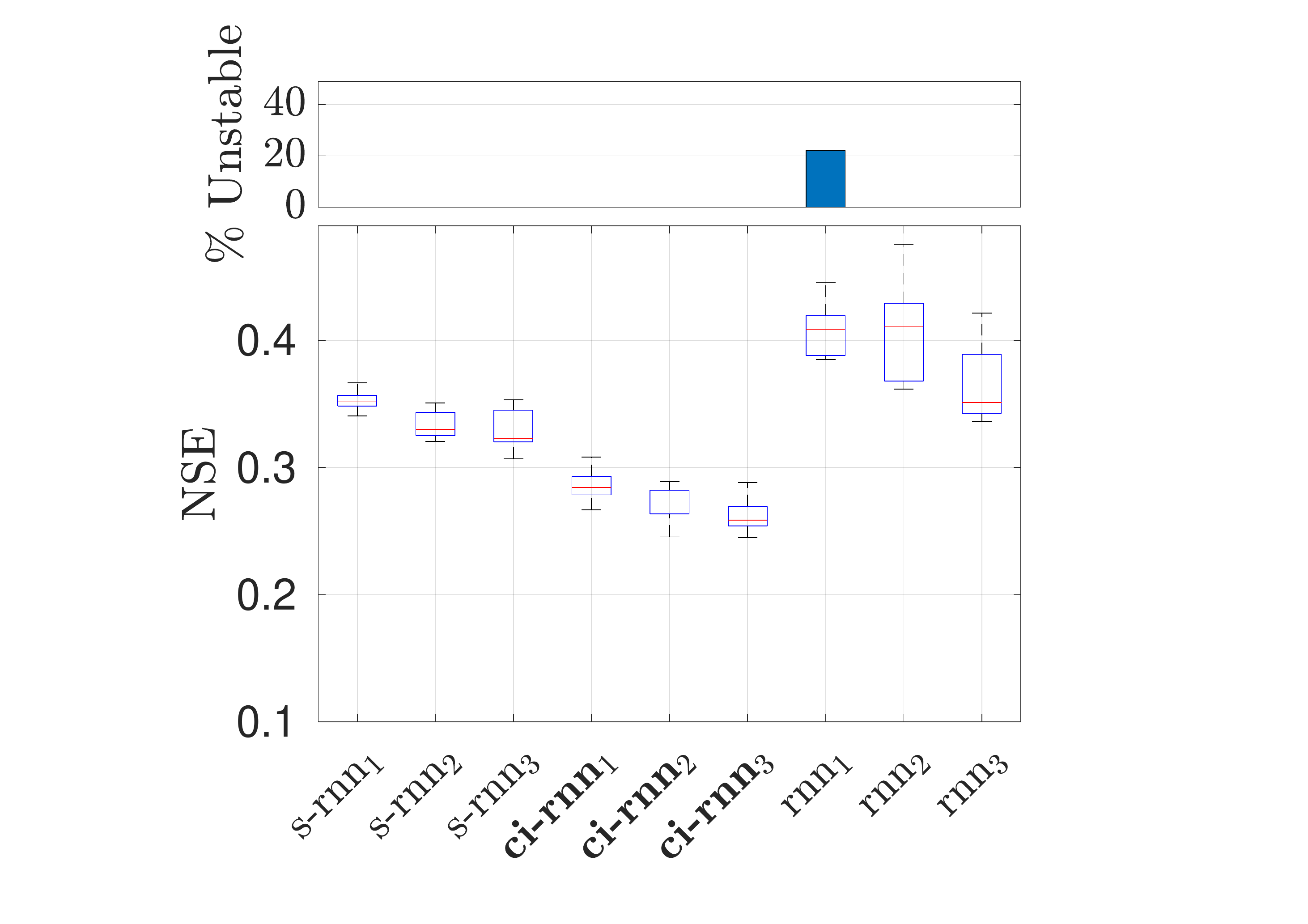}}
	\hfill
	\subfigure[All subjects Test NSE \label{fig:test scatter}]{\includegraphics[trim = {0.7cm 6cm 2.0cm 7cm}, clip,width=0.32\linewidth]{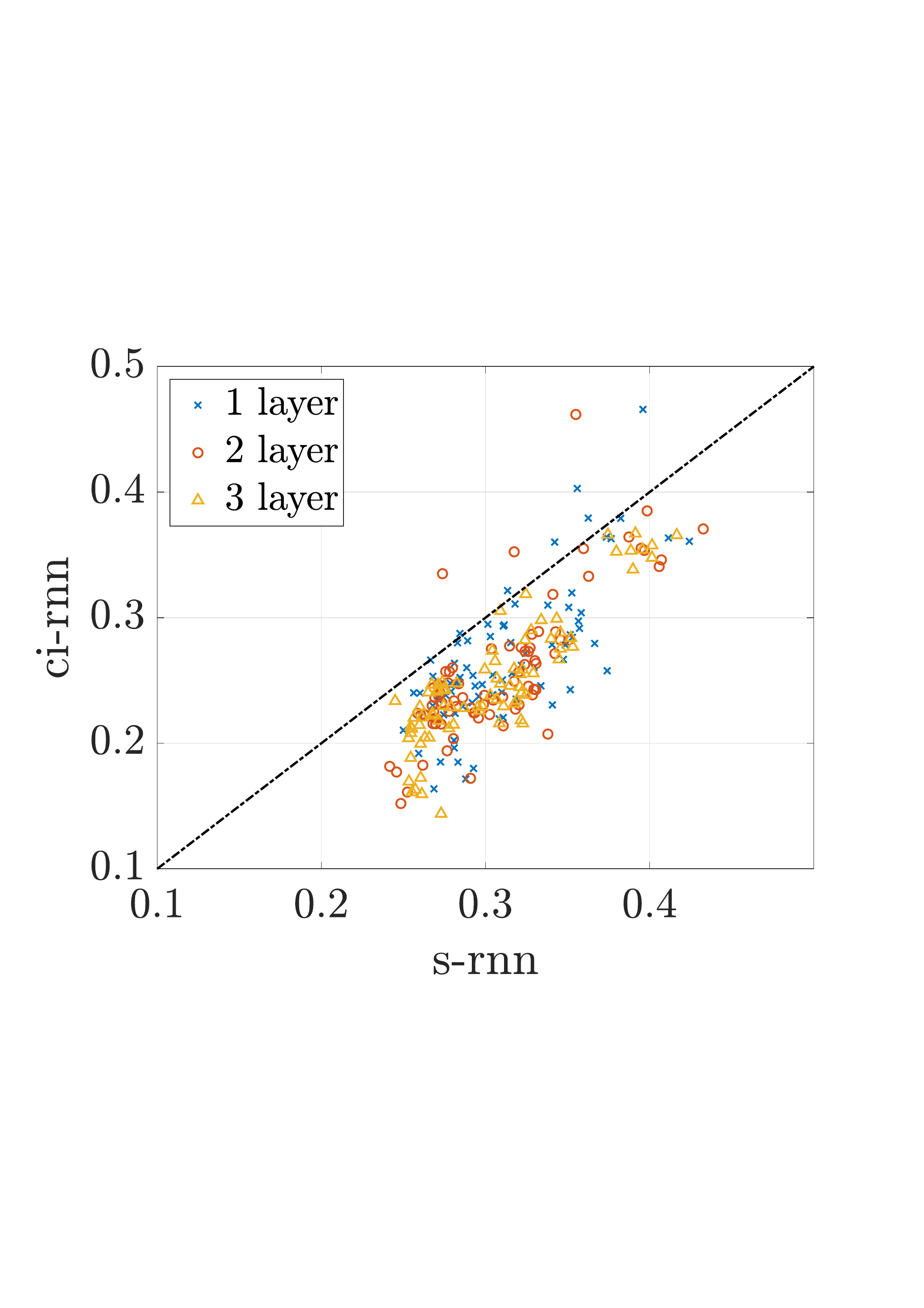}
	}
	\caption{Performance of models on gait prediction task. The subscripts refer to the number of layers in the model.}
\end{figure}

%The unconstrained optimization problems were generated as noted in section \ref{sec:Method} and optimized using the ADAM optimizer with an initial learning rate of $0.5 \times 10^{-3}$. The learning rate was slowly decreased according to an exponential schedule. The optimization problem was terminated when more than 20 epochs have passed without seeing a model better validation loss than the best seen so far.
In Figure \ref{fig:training speed}, for each model, we have plotted the mean square error of the outputs of one trial on the training data versus the epochs. We observe a significant increase in the training speed of the ci-RNN compared to the s-RNN and RNN. 
We believe that this is due to the proposed initialization scheme and the increased flexibility provided by the redundant parametrization.
 In order to compare the performance of the resulting models across different participants and model outputs, we use Normalized Simulation Error (NSE) as a performance metric, calculated as:
\begin{equation}
\text{NSE} = \frac{\sum_{t}|y_t - \tilde{y}_t|^2}{\sum_{t}|\tilde{y}_t|^2}.
\end{equation}
The box-plots in Figure \ref{fig:train boxplot} and \ref{fig:test boxplot} show the average NSE across the 6 outputs for the training and test datasets for a single participant across the 9 models trained. We see that in each case the ci-RNN outperforms the models RNN and s-RNN. Additionally we also observe a number of unstable models in the RNN model set that have unbounded NSE.
Figure \ref{fig:test scatter} compares the NSE of the ci-RNN and s-RNN for all models trained and all participants. As the vast majority of the point lie beneath the line $y=x$, we can see that almost all ci-RNNs trained outperform the corresponding s-RNNs, does so in every case for the 3-layer networks.

\bibliographystyle{ieeetran} 
\bibliography{Refs_edited}

\end{document}